\newcommand{{%
\beginpgfgraphicnamed{}
\InputIfFileExists{.tikz}{}{\input{./tikz/.tikz}}
\endpgfgraphicnamed}}[1]{{%
\beginpgfgraphicnamed{#1}
\InputIfFileExists{#1.tikz}{}{\input{./tikz/#1.tikz}}
\endpgfgraphicnamed}}
\newcommand{\InputIfFileExists{.tikz}{}{\input{./tikz/.tikz}}}[1]{\InputIfFileExists{#1.tikz}{}{\input{./tikz/#1.tikz}}}
\tikzstyle{dot}=[circle,fill=black,draw=black]
\tikzstyle{none}=[inner sep=0pt]
\tikzstyle{every loop}=[]
\tikzstyle{(null)}=[]
\tikzstyle{plain}=[]
\tikzstyle{blank}=[inner sep=0pt]
\tikzstyle{box}=[rectangle, minimum size = 0.5cm,fill=white,draw=black]
\tikzstyle{small_node}=[inner sep=0pt, minimum size=0.2cm,circle,fill=white,draw=black]
\tikzstyle{square}=[thick, minimum size=0.3cm,rectangle,fill=white,draw=black]
\tikzstyle{downtri}=[regular polygon,regular polygon sides=3,shape border rotate=180,fill=white,draw=black]
\tikzstyle{uptri}=[regular polygon,regular polygon sides=3,shape border rotate=0,fill=white,draw=black]
\tikzstyle{morph}=[->,draw=black,line width=0.600]
\tikzstyle{arrow}=[-,draw=black,postaction={decorate},decoration={markings,mark=at position .5 with {\arrow{>}}},line width=2.000]
\tikzstyle{tick}=[-,draw=black,postaction={decorate},decoration={markings,mark=at position .5 with {\draw (0,-0.1) -- (0,0.1);}},line width=2.000]
\newcommand{\define}[1]{{\bf \boldmath{#1}}}
\newcommand{\cat}[1]{\textup{\textsf{#1}}}
\newcommand{\FVect}{\cat{FVect}}
\newcommand{\DisCoCat}{\cat{DisCoCat}}
\newcommand{\Dict}{\cat{Dict}}
\newcommand{\maps}{\colon}
\newcommand{\C}{\mathscr{C}}
\newcommand{\Cat}{\cat{Cat}}
\newcommand{\lang}[1]{\ensuremath{\textit{#1}}}
\newcommand{\im}{\mathrm{Im} \;}
\newcommand{\PS}{\mathtt{PS}}
\newcommand{\vect}[1]{\mathbf{#1}}
\theoremstyle{definition}
\newtheorem{thm}{Theorem}[section]
\newtheorem{defn}[thm]{Definition}
\newtheorem{prop}[thm]{Proposition}
\theoremstyle{remark}
\newtheorem{ex}[thm]{Example}
\newtheorem{rmk}[thm]{Remark}
\title{Translating and Evolving: Towards a Model of Language Change in DisCoCat}
\author{Tai-Danae Bradley \institute{Graduate Center, CUNY} \email{tbradley@gradcenter.cuny.edu} \and Martha Lewis \institute{ILLC, University of Amsterdam} \email{m.a.f.lewis@uva.nl} \and Jade Master \institute{Dept. Mathematics, UC Riverside} \email{jmast003@ucr.edu} \and Brad Theilman \institute{Gentner Lab, UC San Diego} \email{btheilma@ad.ucsd.edu}}
\begin{document}
\maketitle
\begin{abstract}
The categorical compositional distributional (DisCoCat) model of meaning developed by \cite{coecke2010} has been successful in modeling various aspects of meaning. However, it fails to model the fact that language can change. We give an approach to DisCoCat that allows us to represent language models and translations between them, enabling us to describe translations from one language to another, or changes within the same language. We unify the product space representation given in \cite{coecke2010} and the functorial description in \cite{kartsaklis2013reasoning}, in a way that allows us to view a language as a catalogue of meanings. We formalize the notion of a lexicon in DisCoCat, and define a dictionary of meanings between two lexicons. All this is done within the framework of monoidal categories. We give examples of how to apply our methods, and give a concrete suggestion for compositional translation in corpora.
\end{abstract}

\section{Introduction}
Language allows us to communicate, and to compose words in a huge variety of ways to obtain different meanings. It is also constantly changing. The compositional distributional model of \cite{coecke2010} describes how to use compositional methods within a vector space model of meaning. However, this model, and others that are similar \cite{baroni2010, maillard2014}, do not have a built in notion of language change, or of translation between languages.

In contrast, many statistical machine translation systems currently use neural models, where a large network is trained to be able to translate words and phrases \cite{mikolovtranslation, gao2014}. This approach does not make use of the grammatical structure which allows you to build translations of phrases from the translations of individual words. In this paper we define a notion of translation between two compositional distributional models of meaning which constitutes a first step towards unifying these two approaches.

Modeling translation between two languages also has intrinsic value, and doing so within the DisCoCat framework means that we can use its compositional power. In section \ref{catlangmodel}, we provide a categorical description of translation between two languages that encompasses both updating or amending a language model and translating between two distinct natural languages.

 In order to provide this categorical description, we must first introduce some preliminary concepts. In section \ref{productspace} we propose a unification of the product space representation of a language model of \cite{coecke2010} and the functorial representation of \cite{kartsaklis2013reasoning}. This allows us to formalize the notion of lexicon in section \ref{lexicons} which had previously been only loosely defined in the DisCoCat framework. We then show how to build a dictionary between two lexicons and give an example showing how translations can be used to model an update or evolution of a compositional distributional model of meaning. In section \ref{EngSpan} we give a concrete suggestion for automated translation between corpora in English to corpora in Spanish.
\section{Background}
\subsection{Categorical Compositional Distributional Semantics}
\label{sec:DisCo}
\em Categorical compositional distributional models \em \cite{coecke2010} successfully exploit the compositional structure of natural language in a principled manner, and have outperformed other approaches in Natural Language Processing (NLP) \cite{grefenstette2011, kartsaklis2013}.  The approach works as follows.   A mathematical formalization of grammar is chosen, for example \em Lambek's pregroup grammars \em \cite{lambek2001},  although the approach is equally effective with other categorial grammars~\cite{Coecke2013}.  Such a categorial grammar allows one to verify whether a phrase or a sentence is grammatically well-formed by means of a computation that establishes the overall grammatical type,  referred to as \em a type reduction\em.  The meanings of \em individual \em words are established using a distributional model of language, where they are described as vectors of co-occurrence statistics derived automatically from corpus data~\cite{lund1996}.  The categorical compositional distributional programme unifies these two aspects of language in a compositional model where grammar mediates composition of meanings. This  allows us to derive the meaning of sentences from their grammatical structure, and the meanings of their constituent words. The key insight that allows this approach to succeed is that both pregroup grammars and the category of vector spaces carry the same abstract structure~\cite{coecke2010},  and the same holds for other categorial grammars since they typically have a weaker categorical structure.

The categorical compositional approach to meaning uses the notion of a \emph{monoidal category}, and more specifically a \emph{compact closed} category to understand the structure of grammar and of vector spaces. For reasons of space, we do not describe the details of the compositional distributional approach to meaning. Details can be found in \cite{coecke2010, kartsaklis2013reasoning}, amongst others. We note only that instead of using a pregroup as our grammar category, we use the free compact closed category $J = \C(\mathscr{B})$ generated over a set of types $\mathscr{B}$, as described in \cite{prellerlambek2007, preller2013}.

\section{Translating and Evolving}

The categorical model has proved successful in a number of natural language processing tasks \cite{grefenstette2011, kartsaklis2013}, and is flexible enough that it can be extended to include ambiguity \cite{piedeleu2015} and changes of the semantic category \cite{bolt2017, almehairi2016}. These formalisms have allowed for connections between semantic meanings. By representing words as density matrices, a variant of L\"{o}wner ordering has been used to measure the degree of entailment between two words \cite{balkir2015, bankova2016}. A more simple notion of similarity has been implemented in the distributional model by using dot product \cite{coecke2010}. However, these notions of similarity are not built into the formalism of the model. This section defines the notion of a categorical language model which keeps track of internal relationships between semantic meanings. 

So far the implementation of these models has been static. In this section, we define a notion of translation which comprises a first step into bringing dynamics into these models of meaning. We show how a language model can be \emph{lexicalized}, i.e. how vocabulary can be attached to types and vectors and introduce a category of lexicons and translations between them. This allows dictionary between phrases in one language model and the phrases in another.

\subsection{Categorical Language Models and Translations}\label{catlangmodel}
\begin{defn}\label{languagemodel}Let $J$ be a category which is freely monoidal on some set of grammatical types. A \define{distributional categorical language model} or \define{language model} for short is a strong monoidal functor 
	\[ F \maps (J, \cdot) \to (\FVect, \otimes) \]
\end{defn}
\noindent If $J$ is compact closed then the essential image of $F$ inherits a compact closed structure. All of the examples we consider will use the canonical compact closed structure in $\FVect$. However, this is not a requirement of the general approach, and other grammars that are not compact closed my be used, such as Chomsky grammars \cite{hedges2016} or Lambek monoids \cite{Coecke2013}.

Distributional categorical language models do not encapsulate everything about a particular language. In fact, there are many possible categorical language models for the same language and there is a notion of translation between them.

	\begin{defn}\label{translation}
		A \define{translation} $T=(j, \alpha)$ from a language model $F \maps J \to \FVect$ to a language model $F' \maps J' \to \FVect$ is a monoidal functor $j \maps J \to J'$ and a monoidal natural transformation $\alpha \maps F \Rightarrow F' \circ j$. Pictorially, $(j,\alpha)$ is the following 2-cell 
		\[ 
		\xymatrix@=0.5em{
			J \ar[rrr]^{F}  \ar[ddd]_{j} &  & & \FVect \\
			&  \mathbin{\rotatebox[origin=c]{-90}{$\Rightarrow$}} 
		\;\alpha & \\
			\\
			J' \ar[uuurrr]_{F'} & &}
		\]	 
		Given another language model $F'' \maps J'' \to \FVect$ and a translation $T'= (j', \alpha')$ the composite translation is computed pointwise. That is, $T'\circ T$ is the translation $(j'\circ j, \alpha' \circ \alpha)$ where $\alpha'\circ \alpha$ is the vertical composite of the natural transformations $\alpha$ and $\alpha'$.	
	\end{defn}
\begin{defn}
		 Let $\define{\DisCoCat}$ be the category with distributional categorical language models as objects, translations as morphisms, and the composition rule described above.
\end{defn}
This category allows us to define ways of moving between languages. The most obvious application of this is that of translation between two languages such as English and Spanish. However, the translation could also be from a simpler language to a more complex language, which we think of as learning, and it could be within a shared language, where we see the language evolving.

\subsection{The Product Space Representation}\label{productspace}
In \cite{coecke2010} the \emph{product space representation} of language models was introduced as a way of linking grammatical types with their instantiation in $\FVect$. The idea is that the meaning computations take place in the category $J \times \FVect$ where $J$ is a pregroup or free compact closed category. Let $p$ be a sentence, that is a sequence of words $\{w_i\}$ whose grammatical types reduce to the sentence type $s$. To compute the meaning of $p$ you:
\begin{itemize}
	\item Determine both the grammatical type $g_i$ and distributional meaning $\vect{v}_i$ in $V_{g_i}$ where $V_{g_i}$ is a meaning space for the grammatical type $g_i$.
	\item Using the monoidal product and tensor product, obtain the element $g_1 \cdot \ldots \cdot g_n \in J$ and $\vect{v}_1 \otimes \cdots \otimes \vect{v_n} \in V_{g_1} \otimes \ldots \otimes V_{g_n}$.
	\item Let $r \maps g_1 \cdot \cdots \cdot g_n \to s$ be a type reduction in $J$. There is a linear transformation $M \maps V_{g_1} \otimes \cdots \otimes V_{g_n} \to V_{s}$ given by matching up the compact closed structure in $J$ with the canonical compact closed structure in $\FVect$. Apply $M$ to the vector $\vect{v}_1 \otimes \ldots \otimes \vect{v}_n$ to get the distributional meaning of your sentence.
\end{itemize}

The product space $J \times \FVect$ provides a setting in which the meaning computations take place but it does not contain all of the information required to compute compositional meanings of sentences. To do this requires an assignment of every grammatical type to a vector space and every type reduction to a linear transformation in a way which preserves the compact closed structure of both categories. This suggests that there is a compact closed functor $F \maps J \to \FVect$ lurking beneath this approach. With this in mind we introduce a new notion of the product space representation using the Grothendieck construction \cite{elephant}. In order to use the Grothendieck construction we first need to interpret vector spaces as categories. 

In this paper, we will do this in two mostly trivial ways which do not take advantage of the vector space structure in $\FVect$. The first way we will turn vector spaces into categories is via the discrete functor 
\[D \maps \FVect \to \Cat\]

 $D$ assigns each vector space $V$ to the discrete category of its underlying set. For a linear transformation $M\maps V \to W$, $D(M)$ is the unique functor from $D(V)$ to $D(W)$ which agrees with $M$ on the elements of $V$. 

There is another way to generate free categories from sets.
\begin{defn}\label{C}
	Let $V$ be a finite dimensional real vector space. Then, the \define{free chaotic category} on $V$, denoted $C(V)$, is a category where
	\begin{itemize}
		\item objects are elements of $V$ and,
		\item for all $\vect{u}$,$\vect{v}$ in $V$ we include a unique arrow $d(\vect{u},\vect{v}) \maps \vect{u} \to \vect{v}$ labeled by the Euclidean distance $d(\vect{u},\vect{v})$ between $\vect{u}$ and $\vect{v}$.
	\end{itemize}
	This construction extends to a functor $C \maps \FVect \to \Cat$. For $M \maps V \to W$, define $C(M) \maps C(V) \to C(W)$ to be the functor which agrees with $M$ on objects and sends arrows $d(\vect{u},\vect{v})$ to $d(M\vect{u},M\vect{v})$.
\end{defn}
The morphisms in $C(V)$ for a vector space $V$ allow us to keep track of the relationships between different words in $V$.

We now give a definitions of the product space representation in terms of the Grothendieck construction which depends on a choice of functor $K \maps \FVect \to \Cat$.
\begin{defn}\label{def:PS}
	Let $F \maps J \to \FVect$ be a language model and let $K \maps \FVect \to \Cat$ be a faithful functor. The \define{product space representation} of $F$ with respect to $K$, denoted $\PS_K(F)$, is the Grothendieck construction of $K \circ F$. Explicitly, $\PS_K(F)$ is a category where 
	\begin{itemize}
		\item an object is a pair $(g, \vect{u})$ where $g$ is an object of $J$ and $\vect{u}$ is an object of $K \circ F(g)$
		\item a morphism from $(g, \vect{u})$ to $(h,\vect{v})$ is a tuple $(r,f)$ where $r \maps g \to h$ is a morphism in $J$ and $f \maps K\circ F(r)(\vect{u}) \to \vect{v}$ is a morphism in $K \circ F(h)$
		\item the composite of $(r',f') \maps (g,\vect{u}) \to (h,\vect{v})$ and $(r,f) \maps (h,\vect{v}) \to (i,\vect{x})$ is defined by 
		\[ (r,f) \circ (r' ,f') = ( r \circ r', f \circ (K\circ F)(r)(f') ) \]
	\end{itemize}
\end{defn}

\begin{rmk}
	Because $K$ is faithful, it is an equivalence of categories onto its essential image in $\Cat$. Because monoidal structures pass through equivalences $K \circ F \maps J \to \im  K \circ F$ is a monoidal functor where $\im K \circ F$ denotes the essential image of $K \circ F$.
\end{rmk}
When $K$ is equal to the discrete category functor $D$, then the product space representation is the \define{category of elements of $F$}. This is a category where
\begin{itemize}
	\item objects are pairs $(g,\vect{u})$ where $g$ is a grammatical type and $\vect{u}$ is a vector in $F(g)$.
	\item a morphism $r \maps (g,\vect{u}) \to (h,\vect{v})$ is a type reduction $r \maps g \to h$ such that $F(r)(\vect{u}) = \vect{v}$
\end{itemize}

In this context we can compare the product space representation in Definition \ref{def:PS} with the representation introduced in \cite{coecke2010} to see that they are not the same. One difference is that $\PS_D (F)$ only includes the linear transformations that correspond to type reductions and not arbitrary linear transformations. This narrows down the product space representation to a category that characterizes the meaning computations which can occur. Also, the meaning reductions in $\PS_D (F)$ correspond to morphisms in the product space representation whereas before they occurred within specific objects of the product space. Using this definition of the product space representation, we are able to formally introduce a lexicon into the model and understand how these lexicons are affected by translations.

When $K = C$ as in Definition \ref{C} the product space representation is as follows:
\begin{itemize}
\item objects are pairs $(g,\vect{u})$ where $g$ is a grammatical type and $\vect{u}$ is a vector in $F(g)$.
\item a morphism $(r, d) \maps (g,\vect{u}) \to (h,\vect{v})$ is:
\begin{itemize}
\item a type reduction $r \maps g \to h$
\item a positive real number ${d \maps C\circ F (r) (\vect{u}) \to \vect{v}}$
\end{itemize}
\end{itemize}
Now, objects in $\PS_C(F)$ are pairs of grammatical types and \emph{vectors}, rather than vector spaces. We can therefore see $\PS_C(F)$ as a catalogue of all possible meanings associated with grammatical types. The linear transformations available in $\PS_C(F)$ are only those that are derived from the grammar category.

\begin{prop}[$\PS_K(F)$ is monoidal]
	For $K=C$ and $K=D$,
	$\PS_K(F)$ is a monoidal category with monoidal product given on objects by \[(g,\vect{u}) \otimes (h,\vect{v}) = (g\cdot h,\Phi_{g,h} (\vect{u} \otimes \vect{v})  )\]
	and on morphisms by 
	\[ (r,f) \otimes (r',f') = (r \cdot r', \Phi_{g,h}(f \otimes  f'))\]
	where $\Phi_{g,h} \maps K \circ F (g) \vect{\otimes} K \circ F(h) \to K \circ F(g \cdot h)$ is the natural isomorphism included in the data of the monoidal functor $K \circ F$.
\end{prop}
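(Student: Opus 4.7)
The plan is to recognize $\PS_K(F)$ as an instance of the general construction producing a monoidal category from a strong monoidal functor into $\Cat$ via the Grothendieck construction, then to read off the monoidal structure stated in the proposition from that general machinery. The preceding remark already hands us the essential input: $K \circ F$ is a strong monoidal functor $(J, \cdot) \to (\im K \circ F, \otimes)$, so we have a coherent natural isomorphism $\Phi_{g,h} \maps K \circ F(g) \otimes K \circ F(h) \to K \circ F(g \cdot h)$ together with a unit comparison, satisfying the monoidal-functor coherence axioms.

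First I would verify that the tensor stated in the proposition is well defined. On objects this is immediate, since $\Phi_{g,h}(\vect{u} \otimes \vect{v})$ is an object of $K \circ F(g \cdot h)$. On morphisms, given $(r,f) \maps (g,\vect{u}) \to (h,\vect{v})$ and $(r',f') \maps (g',\vect{u}') \to (h',\vect{v}')$, the naturality square for $\Phi$ at the pair $(r,r')$ yields
\[ K \circ F(r \cdot r')\bigl(\Phi_{g,g'}(\vect{u} \otimes \vect{u}')\bigr) = \Phi_{h,h'}\bigl(K \circ F(r)(\vect{u}) \otimes K \circ F(r')(\vect{u}')\bigr), \]
so that $\Phi_{h,h'}(f \otimes f')$ is a morphism with the required source and target inside $K \circ F(h \cdot h')$, giving a legitimate second component of the tuple.

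Next I would show $\otimes$ is functorial. Preservation of identities follows because $\Phi_{h,h'}$ takes identities in the product category to identities. Preservation of composition is a diagram chase combining the composition rule for $\PS_K(F)$ from Definition \ref{def:PS} with naturality of $\Phi$. To finish, I take the associator and unitor of $\PS_K(F)$ to be those of $J$ in the first coordinate paired with the coherence isomorphisms of $K \circ F$ in the second; the triangle and pentagon identities then reduce to the corresponding coherence axioms in $J$ combined with those of $K \circ F$ as a strong monoidal functor.

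The main obstacle is the composition-preservation step of functoriality, where one has to compose two morphisms in $\PS_K(F)$ and reconcile how $\Phi_{h,h'}(f \otimes f')$ interacts with the reindexing induced by $K \circ F(r \cdot r')$ on the source. Once the naturality square for $\Phi$ is invoked this aligns cleanly and the remaining verifications are routine bookkeeping. Notably, no feature specific to $K = D$ or $K = C$ enters beyond their being faithful and making $K \circ F$ monoidal, so the argument in fact covers any faithful $K$ fitting the remark.
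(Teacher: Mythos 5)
Your argument is correct and is essentially the paper's own: the paper's proof consists solely of citing Theorem 38 of Baez--Moeller, which is precisely the general fact you reprove, namely that the Grothendieck construction of a monoidal functor into $\Cat$ inherits a monoidal structure from the base together with the structure maps $\Phi$. Your direct verification (including reading the morphism-level structure map as $\Phi_{h,h'}$, indexed by the codomain types, and invoking naturality of $\Phi$ at $(r,r')$ for well-definedness and functoriality) is exactly the content of that cited theorem specialized to $K\circ F$, and your closing observation that only faithfulness of $K$ is used matches the paper's surrounding remark.
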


\begin{proof}
	Adapted from Theorem 38 of \cite{BaezMoeller}.
\end{proof}
The fact that $\PS_K(F)$ is monoidal enables us to use the powerful graphical calculus available for monoidal categories. Previously the monoidal graphical calculus has only been used to pictorially reason about grammatical meanings. Because the elements of the product space representation represent both the syntactic and semantic meaning, this proposition tells us that we can reason graphically about the entire meaning of our phrase.

The product space construction also applies to translations:

\begin{prop}[Translations are monoidal]\label{int} 
	Let $K \maps \FVect \to \Cat$ be a fully faithful functor.
	Then there is a functor $\PS_K \maps \DisCoCat \to \cat{MonCat}$, where \cat{MonCat} is the category where objects are monoidal categories and morphisms are strong monoidal functors, 
	 that sends
	\begin{itemize}
		\item language models $F \maps J\to \Cat$ to the monoidal category $\PS_K (F)$
		\item translations $T=(j, \alpha)$ to the strong monoidal functor $\PS_K(T) \maps \PS_K (F) \to \PS_K (F')$  where the functor $\PS_K (T)$ acts as follows:
		\begin{itemize}
			\item On objects, $\PS_K (T)$ sends $(g,\vect{u})$ to $(j(g),\alpha_g \vect{u})$.
			\item Suppose $(r,f) \maps (g,\vect{u}) \to (h,\vect{v})$ is a morphism in $\PS_K (F)$ so that $r \maps g \to h$ is a reduction in $J$ and $f \maps F(r)(\vect{u}) \to \vect{v}$  is a morphism in $F(h)$. Then $\PS_K (T)$ sends $(r,f)$ to the pair $(j(r), \alpha_h \circ f)$.
		\end{itemize} 	
	\end{itemize}
\end{prop}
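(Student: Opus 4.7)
The plan is to recognize this as the functoriality of the Grothendieck construction, lifted to the monoidal setting. Since $K$ is (strong) monoidal and faithful, applying $K$ to the monoidal natural transformation $\alpha \maps F \Rightarrow F' \circ j$ yields a monoidal natural transformation $K\alpha \maps K \circ F \Rightarrow K \circ F' \circ j$ between monoidal functors $J \to \Cat$. The categories $\PS_K(F)$ and $\PS_K(F')$ are by definition the Grothendieck constructions of $K \circ F$ and $K \circ F'$, and the Grothendieck construction is well known to be 2-functorial in such data. So the claim is essentially a packaging result.

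First, I would verify that the prescription for $\PS_K(T)$ yields a well-defined functor. On a morphism $(r, f) \maps (g, \vect{u}) \to (h, \vect{v})$, where $f \maps K\circ F(r)(\vect{u}) \to \vect{v}$ is a morphism in $K \circ F(h)$, naturality of $\alpha$ at $r$ gives the identity $K \circ F' \circ j(r) \circ \alpha_g = \alpha_h \circ K \circ F(r)$, so post-composing $K\circ F'\circ j(r)(\alpha_g \vect{u}) \xrightarrow{\alpha_h(f)} \alpha_h \vect{v}$ is well-typed as a morphism in $K \circ F' \circ j(h)$. This shows the assignment $(r, f) \mapsto (j(r), \alpha_h \circ f)$ lands in the correct hom-set of $\PS_K(F')$. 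Preservation of identities is immediate, and preservation of composition reduces to a short diagram chase combining functoriality of $K \circ F' \circ j$ with naturality of $\alpha$.

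Second, I would establish strong monoidality of $\PS_K(T)$. The coherence isomorphism $\PS_K(T)(x) \otimes \PS_K(T)(y) \to \PS_K(T)(x \otimes y)$ is assembled from the coherator of $j$ (an iso $j(g) \cdot j(h) \to j(g \cdot h)$ in $J'$) together with the monoidal-natural-transformation axiom for $\alpha$, which asserts $\alpha_{g \cdot h} \circ \Phi^F_{g,h} = \Phi^{F'}_{j(g), j(h)} \circ (\alpha_g \otimes \alpha_h)$ up to the coherators of $j$. This produces the first component in $J'$ and exhibits the second component as an identity in the fibre, yielding an invertible pair because $j$ and $\alpha$ are both strong.

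Third, functoriality of $\PS_K$ itself: on objects, $\PS_K(T' \circ T)(g, \vect{u}) = ((j' \circ j)(g), (\alpha' \circ \alpha)_g \vect{u}) = (j'(j(g)), \alpha'_{j(g)}(\alpha_g \vect{u})) = \PS_K(T')(\PS_K(T)(g, \vect{u}))$ directly from the definition of vertical composition of natural transformations, and the morphism case is analogous. Identity translations map to identity functors. The main obstacle is the bookkeeping of coherence data in the monoidality step: one must carefully interleave the coherators of $j$, $F$, $F'$, and $K$, and check that the monoidal-naturality axiom for $\alpha$ is exactly what makes the relevant squares commute. Every other part of the argument is a mechanical unfolding of the Grothendieck construction.
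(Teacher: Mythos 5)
Your proof is correct and follows essentially the same route as the paper, which gives no details and simply defers to Theorem 39 of \cite{BaezMoeller} --- precisely the monoidal 2-functoriality of the Grothendieck construction that you verify directly (well-definedness via naturality of $\alpha$ at $r$, strong monoidality via the coherators of $j$ and monoidal naturality of $\alpha$, and functoriality of $\PS_K$ via vertical composition). The one tacit assumption, which you share with the paper's own statement, is that $K$ must be strong monoidal and not merely fully faithful for $K\alpha$ to be a monoidal natural transformation and for $\PS_K(F)$ to carry the stated monoidal structure.
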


\begin{proof}
Adapted from Theorem 39 in \cite{BaezMoeller}.
\end{proof}
\subsection{Lexicons}\label{lexicons}
 Using our definition of the product space representation, we are able to formally introduce a lexicon into the model and describe how these lexicons are affected by translations. In what follows we fix $K = C$ in all product space constructions and denote $\PS_C (F)$ as $\PS(F)$. We also use the notation $F_C$ for $C \circ F$.

	\begin{defn}
\label{def:lexicon}
	Let $F$ be a categorical language model and let $W$ be a finite set of words, viewed as discrete category. Then a \define{lexicon for $F$} is a functor $ \ell \maps W\to \PS (F)$. 
This corresponds to a function from $W$ into the objects of $\PS(F)$. 
	\end{defn}
Lexicons can be extended to arbitrary phrases in the set of words $W$. Phrases are finite sequences of words $v_1 \ldots v_n \in W^*$ where $W^*$ is the free monoid on $W$. The function $\ell$ assigns to each $v_i \in W$ the pair $(g_i,\vect{v_i})$ corresponding to its grammatical type $g$ and its semantic meaning $\vect{v_i} \in F(g_i)$. Because $W^*$ is free, this defines a unique object in $\PS(F)$:
\[
(g, \vect{v}) := \otimes_{i=1}^{n} \ell (v_i)= (g_1, \vect{v}_1) \otimes  \ldots \otimes (g_n , \vect{v}_n) =  (g_1 \cdots g_n,\vect{v}_1 \otimes \ldots \otimes \vect{v}_n) 
\]
where $g_i$ is the grammatical type of $v_i$ and $\vect{v}_i$ is the semantic meaning of $v_i$ for $ i \in \{1, ..., n\}$. The extension of $\ell$ to $W^*$ will be denoted by 
\[l^* \maps W^* \to \PS(F).\]



\begin{ex}
Let $J = \C(\{n,s\})$ be the free compact closed category on the grammatical types of nouns and sentences. Then, for the phrase $\lang{u} = \lang{Rose likes Rosie}$, a lexicon for $F$ gives the unique element
\[
\ell(\lang{u}) = (g, \vect{u}) = (n,\vect{Rose})\otimes (n^rsn^l,\vect{likes})\otimes (n,\vect{Rosie})\]
In $\PS(F)$, the grammar type $n^r s n^l$ reduces 
to $s$ via the morphism $r=\epsilon_n\; 1_s \;\epsilon_n$ and so we get a reduction  \[(n,\vect{Rose})\otimes (n^rsn^l,\vect{likes})\otimes (n,\vect{Rosie})\overset{(r,F(r))}{\longrightarrow}(s,\vect{Rose\;likes \;Rosie})\]
\end{ex}
%
%
%
To fully specify a translation between two lexicons it is not necessary to manually match the words in each corpora. This is because a relation between the phrases in the corpora can be derived from a translation between the language models.
\begin{defn}\label{def:dictionary}
	
	Let $\ell \maps W \to \PS (F)$ and $m \maps V \to \PS (G)$ be lexicons and let $T$ be a translation from $F$ to $G$. Then, the \define{$F$-$G$ dictionary} with respect to $T$ is the comma category 
	\[(\PS (T) \circ \ell^* ) \downarrow m^*\]
	denoted by $\Dict_T$. Since $W$ and $V$ are discrete categories, $(\PS (T) \circ \ell^*) \downarrow m^*$ is a set of triples $(p,(r,d),q)$ 
where $p \in W^*$, $q \in V^*$ and $(r,d)\maps (\PS(T) \circ \ell) (p) \to m(q)$  
is a morphism in $\PS(G)$. Explicitly, let 
\[
 \ell(p)= (g,\vect{p})\quad\text{ and }\quad m(q) = (g',\vect{q})\] 
then $(r,d)$ is 
\begin{itemize}
\item a type reduction $r \maps j(g) \to g'$ in the grammar category $J$
\item a morphism $d$ in $C \circ G (g')$. Recall from Definition \ref{C} that this corresponds to a real number 
$d(\vect{p}', \vect{q})$ denoting the distance between $\vect{p}'$ and $\vect{q}$ in  $G(g')$. Here, $\vect{p}'$  is the vector that results from applying the translation and any grammatical reductions. Namely, 
$\vect{p}' = (C \circ G (r) \circ \alpha_g)\vect{p}$ 
i.e., we firstly translate the vector $\vect{p}$ into $\PS(F')$, then apply the linear map corresponding to the reduction $r$, and finally send the resulting vector to its corresponding object in the chaotic category.
\end{itemize}
\end{defn}
$\Dict_{T}$ and  allows us to keep track of the distances between phrases in $W$ to phrases in $W'$ in a compositional way; similarities between phrases are derived from similarities between the constituent words. 

Let $k$ be a positive real number. Then define $\Dict_{T, k}$ to be the relation which pairs two words in $\Dict_{T}$ if the distance $d$ between their semantic meanings is less than or equal to $k$. The purpose of this is to say that we are interested in pairs of words and phrases which do not have to be identical, but whose meaning is sufficiently close.


\begin{ex}[Syntactic simplification]\label{ex:pluralsing} We give an example of a translation from a language with several noun types, accounting for singular $n_s$ and plural $n_p$ nouns to a language with one noun type. To start, suppose $W^*$ is the free monoid on the set $\{\lang{Rosie, wears, boots, shoes, bikini}\}$
and set $V=W$. Suppose $\mathscr{B}=\{n_s,n_p,n,s\}$ and $\mathscr{B}'=\{n,s\}$ so that $J=\mathscr{C}(\mathscr{B})$ and $J'=\mathscr{C}(\mathscr{B}')$,
and let $T=(j,\alpha)$ be a translation from $F$ to $F'$,
where the language model $F$ has $F(n)=N$ and $F(n_s)=F(n_p)=N'$
where $N\cong \mathbb{R}^3$ is generated by $\{\vect{boots}, \vect{shoes}, \vect{bikini}\}$
and $N'=N\times \mathbb{R}$ where the extra dimension records the quantity conveyed by the noun. Let $F(s)=S$ be a one-dimensional space spanned by $\vect{1}$, which denotes \textit{surprise}.  For the purposes of this example we will normalize all non-zero values in $S$ to the vector $\vect{1}$. This gives only two attainable values in $S$; $\vect{s} = \vect{1}$ meaning that the sentence is surprising, and $\vect{s} = \vect{0}$ meaning that the sentence is not surprising. The language model $F'$ agrees with $F$ on both $n$ and $s$, that is $F'(n)=N$ and $F'(s)=S$. The functor $j$ is given by $j(n)=j(n_s)=j(n_p)=n$ and $j(s)=s$ and finally, the components of $\alpha$ are by the identity on every space except for $N'$ where it is defined as the canonical projection onto the first three coordinates.

\[ 
\begin{tikzcd}
W^* \arrow[d, "\ell"] & W^* \arrow[d, "\ell'"]\\
\PS(F) \arrow[r, "\PS(T)"'] &\PS(F')
\end{tikzcd}
	 \]
 defined as follows:
\[
\begin{tikzcd}[row sep=tiny]
W \arrow[r, "\ell"] & \PS(F) \\
\lang{Rosie} \arrow[r, maps to] & {(n_s,(2,5,3, 1)^T)} \\
\lang{boots} \arrow[r, maps to] & {(n_p,(1,0,0,2)^T)} \\
\lang{a~boot} \arrow[r, maps to] & {(n_p,(1,0,0,1)^T)} \\
\lang{wears} \arrow[r, maps to] & {\left(n_m^r s n_m^l, \begin{pmatrix} 1& 1& 1 & 0\\
																	-1 & -1 & -1 & 0\\
																	1 & 1& 1 & 0\\
																	-2& -2 &-1 & 1 \end{pmatrix} \right)}
\end{tikzcd}
\begin{tikzcd}[row sep=tiny]
W \arrow[r, "\ell'"] & \PS(F') \\
\lang{Rosie} \arrow[r, maps to] & {(n,(2,5,3)^T)} \\
\lang{boots} \arrow[r, maps to] & {(n,(1,0,0)^T)} \\
\lang{wears} \arrow[r, maps to] & {\left(n^r s n^l, \begin{pmatrix} 1& 1& 1\\
																	-1 & -1 & -1\\
																	1 & 1& 1\\ \end{pmatrix} \right)} \\
\end{tikzcd}
\]
where we use $n_m$ to denote either of $n_s$ or $n_p$. The type of $\lang{wears}$ is polymorphic - it can take singular, plural, or mass nouns as subject and object. Here, we consider types with singular or plural arguments.
	
The functor $\PS(T)$ uses $j$ and $\alpha$ to assign a tuple on the right to each to each tuple on the left, i.e. 
\[
\begin{tikzcd}[row sep=tiny]
\PS(F) \arrow[r, "\PS(T)"] & \PS(F') \\
{(n_s,(2,5,3, 1)^T)} \arrow[r, maps to] & {(n,(2,5,3)^T)} \\
{(n_p,(1,0,0,2)^T)} \arrow[r, maps to] & {(n,(1,0,0)^T)} \\
{\left(n_m^r s n_m^l, \begin{pmatrix} 1& 1& 1 & 0\\
																	-1 & -1 & -1 & 0\\
																	1 & 1& 1 & 0\\
																	-2& -2 &-1 & 1 \end{pmatrix} \right)}\arrow[r, maps to] & {\left(n^r s n^l, \begin{pmatrix} 1& 1& 1\\
																	-1 & -1 & -1\\
																	1 & 1& 1\\ \end{pmatrix} \right)} \\
\end{tikzcd}
\]
In particular, each item on the right hand side is of the form $(\PS(T)\circ \ell)(u)$ where $u$ is an element of $W$.
Now, in $F$, we have:
	\begin{align*}
	\vect{Rosie~wears~boots} &= F(\epsilon_n \cdot 1_s \cdot \epsilon_n)(\vect{Rosie} \otimes \vect{wears} \otimes \vect{boots})\\
	& = (\epsilon_n \otimes 1_s \otimes \epsilon_n)\left(\begin{pmatrix} 2\\5\\3\\1 \end{pmatrix} \otimes \begin{pmatrix} 1& 1& 1 & 0\\
									-1 & -1 & -1 & 0\\
									1 & 1& 1 & 0\\
									-2& -2 &-1 & 1 \end{pmatrix} \otimes \begin{pmatrix} 1\\0\\0\\2 \end{pmatrix}\right)= 0 \text{, i.e. unsurprising}
	\end{align*}
On the other hand,
		\begin{align*}
	&\vect{Rosie~wears~a~boot} = F(\epsilon_n \cdot 1_s \cdot \epsilon_n)(\vect{Rosie} \otimes \vect{wears} \otimes \vect{a~boot})\\
	& \qquad = (\epsilon_n \otimes 1_s \otimes \epsilon_n)\left(\begin{pmatrix} 2\\5\\3\\1 \end{pmatrix} \otimes \begin{pmatrix} 1& 1& 1 & 0\\
									-1 & -1 & -1 & 0\\
									1 & 1& 1 & 0\\
									-2& -2 &-1 & 1 \end{pmatrix} \otimes \begin{pmatrix} 1\\0\\0\\1 \end{pmatrix}\right)= -1 = 1 \text{ after normalization, i.e. surprising}
	\end{align*}	
We translate $\lang{Rosie wears boots}$ by computing 
\begin{align*}
&(\PS(T) \circ \ell)(\lang{Rosie wears boots}) = \PS(T)((n_s, \vect{Rosie}) \otimes (n_m^r s n_m^l, \vect{wears}) \otimes (n_p, \vect{boots})\\
&\qquad= ((j(n_s), \alpha_{n_s}(\vect{Rosie})) \otimes (j(n_m^r s n_m^l), \alpha_{n_m^rsn_m^l}(\vect{wears})) \otimes (j(n_p), \alpha_{n_p}(\vect{boots}))\\
&\qquad= \left(n, \begin{pmatrix}2\\ 5\\ 3 \end{pmatrix}\right) \otimes \left(n^r s n^l, \begin{pmatrix} 1& 1& 1\\
																	-1 & -1 & -1\\
																	1 & 1& 1\\ \end{pmatrix} \right) \otimes \left(n, \begin{pmatrix}1\\ 0\\ 0 \end{pmatrix}\right)= (\PS(T) \circ \ell)(\lang{Rosie wears a boot}) 																
\end{align*}
and applying the relevant reduction morphisms, we obtain:
	\begin{align*}
	\vect{Rosie~wears~boot}' &= F'(\epsilon_n \cdot 1_s \cdot \epsilon_n)(\vect{Rosie}' \otimes \vect{wears}' \otimes \vect{boot}')\\
	& = (\epsilon_n \otimes 1_s \otimes \epsilon_n)\left(\begin{pmatrix} 2\\5\\3 \end{pmatrix} \otimes \begin{pmatrix} 1& 1& 1\\
									-1 & -1 & -1\\
									1 & 1& 1 \end{pmatrix} \otimes \begin{pmatrix} 1\\0\\0 \end{pmatrix}\right)= 0 \text{, i.e. unsurprising}
	\end{align*}
	To translate the sentence \lang{Rosie wears a boot} we perform the same matrix multiplication to obtain a value of $0$ as well. In the language model $F$, these two sentences had distinct meanings. However, because $F'$ cannot detect the quantity of a noun, their translations are both unsurprising.

\end{ex}

This example shows how we can map a language with one grammatical structure onto another with a differing grammatical structure. In this case we have simplified the grammar, but we could also provide a translation $(j, \alpha)$ that maps the simpler grammar into the more complex grammar by identities and inclusion. The phenomenon of grammatical simplification is one that has been observed in various languages \cite{vantrijp2013, kroch1989}. This provides us with the beginnings of a way to describe these kinds of language evolution.
\subsection{Translating Between English and Spanish}\label{EngSpan}

In this section we construct a partial translation from English and Spanish. The relationship between English grammar and Spanish grammar is not functional; there are multiple types in Spanish that a single type in English should correspond to and vice versa.

Let $J_E = \mathscr{C} ( \{a_E, n_E\})$ be the grammar category for English and let $J_S = \mathscr{C}(\{a_S,n_S\})$ be the grammar category for Spanish. In English, adjectives multiply on the left to get a reduction
\[r: a_E n_E \to n_E \] and in Spanish adjectives multiply on the right to get a reduction 
\[q: n_S a_S \to n_S\]

Suppose there is a strict monoidal functor $j \maps J_E \to J_S$ which makes the assignment $j(a_E) = a_S$. We also wish to map the reduction $r$ to the reduction $q$. This requires that $j(a_E n_E) = n_S a_S$. By monoidality this means that $j(a_E) = n_S$ and $j(n_E) = a_S$. A monoidal functor cannot capture this relationship because it must be single valued. However, if we choose to only translate either adjectives or nouns we can construct a translation.

\begin{ex}[Translation at the phrase level]
	\label{ex:plt}
	In this example we choose to translate the fragment of English and Spanish grammar which includes nouns but not adjectives. We can also translate intransitive verbs from English to Spanish while keeping the functor between grammar categories single-valued.
	
Let $J_{En} = \C(\{n_E, s_E\})$ be the free compact closed category on the noun and sentence types in English and let $J_{S}$ be the isomorphic category $\C\{n_S, s_S\}$  generated by the corresponding types in Spanish.	Consider distributional categorical language models
\[F_{En} \maps J_{En} \to \Cat \text{ and } F_{Sp}\maps J_{Sp} \to \Cat\]
for English and Spanish respectively.
Consider a fragment of these languages consisting of only nouns and intransitive verbs. Let $F_{En}(n) = N_{En}$, $F_{En}(s) = S_{En}$, $F_{Sp}(n) = N_{Sp}$ and $F_{Sp}(s) = S_{Sp}$. Lexicons for the two languages can be populated by learning representations from text.
	
	To specify the translation $\PS(T)$ we set $j$ to be the evident functor which sends English types to their corresponding type in Spanish. To define a natural transformation $\alpha: F_{En} \rightarrow F_{Sp} \circ j$ it suffices to define $\alpha$ on the basic grammatical types which are not the nontrivial product of any two other types.
	Because $\alpha$ is a monoidal natural transformation, we have that $\alpha_{gh} = \alpha_g \otimes \alpha_h$ for every product type $gh$.
	
 If there were only one grammatical type $g$, then the language models would have no grammatical content and the translation would consist of a single linear transformation between words in English to words in Spanish. Learning this transformation is in fact a known method for implementing word-level machine translation, as outlined in \cite{mikolovtranslation, joulin2018}. 
 
 However, in general we need the natural transformation $\alpha$ to commute with the type reductions in $\C(\{n_E, s_E\})$.
	Indeed, consider $\vect{dogs} \in N_{En}$, $\vect{run} \in N_{En} \otimes S_{En}$, $\vect{perros} \in N_{Sp}$, $\vect{corren} \in N_{Sp} \otimes S_{Sp}$. We require that 
	
	\[ 
	\xymatrixcolsep{5pc}\xymatrix{ N_{En} \otimes N_{En} \otimes S_{En} \ar[d]^{\alpha_{n\cdot n^l \cdot s}} \ar[r]^-{\epsilon_{N_{En}} \otimes 1_{En}} & S_{En}\ar[d]^{\alpha_s} \\
		N_{Sp} \otimes N_{Sp} \otimes S_{Sp}  \ar[r]_-{\epsilon_{N_{Sp}} \otimes 1_{Sp}}& S_{Sp}	}
	\]
	commutes i.e. that if we first reduce $\vect{dogs} \otimes \vect{run}$ to obtain $\vect{dogs\;run}$ and then translate to $\vect{perros\;corren}$, we get the same as if we translate each word first, sending $\vect{dogs} \otimes \vect{run}$ to $\vect{perros} \otimes \vect{corren}$ and then reduce to $\vect{perros\;corren}$. Because these meaning reductions are built using dot products, this requirement is equivalent to the components of $\alpha$ being unitary linear transformations. In general, a linear transformation learned from a corpora will not be unitary. In this case we can replace $\alpha_g$ with the unitary matrix which is closest to it. This is a reasonable approximation because translations should preserve relative similarities between words in the same language.
\end{ex}

\section{Future Work}
We have defined a category $\DisCoCat$ which contains the categorical compositional distributional semantics of \cite{coecke2010} as objects and ways in which they can change as morphisms. We then outlined how this category can be used to translate, update or evolve different distributional compositional models of meaning.

There is a wide range of future work on this topic that we would like to explore. Some of the possible directions are the following:
\begin{itemize}

\item In this paper, we failed to construct a complete translation from English to Spanish using the definitions in this paper. The difficulty arose from the lack of a functional relationship between the two languages. To accommodate this, translations between language models can be upgraded by replacing functors with profunctors. This would include replacing the grammar transformation $j$ with a monoidal profunctor between the grammar categories. Because relationships between semantic meanings are also multi-valued we plan on replacing the components of the natural transformation with profunctors as well. 

\item This model can be improved to take advantage of the metric space structure of vector spaces to form dictionaries in a less trivial way. This would give a more intelligent way of forming translation dictionaries between two languages 

\item Whilst we have taken care to ensure that the category of language models we use is monoidal, we have not yet taken advantage of the diagrammatic calculus that is available to us. This is something we would like to do in future work.

\item We can better understand how language users negotiate a shared meaning space as in \cite{steels2003}, by modeling this as translations back and forth between agents. This will enhance the field of evolutionary linguistics by giving a model of language change that incorporates categorial grammar. 

\item We would like to use the methods here to implement computational experiments by creating compositional translation matrices for corpora in two different languages. These models of translation may also be used to make the previous computational experiments such as\cite{GrefenstetteSadrzadeh2011} \cite{KartsaklisSadrzadehPulman2012} more flexible.

\end{itemize}

\section{Acknowledgments}
Thanks go to Joey Hirsh for enlightening and fruitful discussions of the issues in this paper. We would also like to thank John Baez for catching a mistake in an early draft of this paper. We gratefully acknowledge support from the Applied Category Theory 2018 School at the Lorentz Center at which the research was conceived and developed, and funding from KNAW. Martha Lewis gratefully acknowledges support from NWO Veni grant `Metaphorical Meanings for Artificial Agents'.

\bibliographystyle{eptcs}
\bibliography{act}
\end{document}